\documentclass[twoside,11pt]{article}

%



\usepackage[linesnumbered,ruled,vlined,boxed]{algorithm2e}
\usepackage{enumerate}
\usepackage{algorithmic}
\usepackage{amsmath, amsthm, amssymb, fullpage}
\usepackage{graphicx}
\usepackage{caption}
\usepackage{subcaption}
\usepackage{verbatim} 

\newtheorem{theorem}{Theorem}[section]
\newtheorem{definition}{Definition}

\newtheorem{corollary}{Corollary}

\newtheorem{lemma}{Lemma}








%


\title{\textbf{Weighted Clustering}}

\author{\textbf{Margareta Ackerman}\\
       Department of Computer Science\\
       Florida State University\footnote{E-mail: \texttt{mackerman@fsu.edu}}
       \and
       \textbf{Shai Ben-David} \\
       School of Computer Science\\
       University of Waterloo\footnote{E-mail: \texttt{shai@cs.uwaterloo.ca}}
       \and
       \textbf{Simina Br\^{a}nzei} \\
       Department of Computer Science and Engineering\\	
       Hebrew University of Jerusalem\footnote{E-mail: \texttt{simina.branzei@gmail.com}}
       \and
       \textbf{David Loker} \\
       School of Computer Science\\
       University of Waterloo\footnote{E-mail: \texttt{dloker@cs.uwaterloo.ca}}}

\begin{document}

\maketitle


\begin{abstract}

One of the most prominent challenges in clustering is ``the user's dilemma," which is the problem of selecting an appropriate clustering algorithm for a specific task. A formal approach for addressing this problem relies on the identification of succinct, user-friendly properties that formally capture when certain clustering methods are preferred over others. 

Until now these properties focused on advantages of classical Linkage-Based algorithms, failing to identify when other clustering paradigms, such as popular center-based methods, are preferable. We present surprisingly simple new properties that delineate the differences between common clustering paradigms, which  clearly and formally demonstrates advantages of center-based approaches for some applications. These properties address how sensitive algorithms are to changes in element frequencies, which we capture in a generalized setting where every element is associated with a real-valued weight.


\end{abstract}

\newpage

\section{Introduction}

Although clustering is  one of the most useful data mining tools, it suffers from a substantial disconnect between theory and practice. Clustering is applied in a wide range of disciplines, from astronomy to zoology, yet its theoretical underpinnings are still poorly understood. Even the fairy basic problem of which algorithm to select for a given application (known as ``the user's dilemma'')  is left to ad hoc solutions, as theory is only starting to address fundamental differences between clustering methods~(\cite{COLT2010,reza,NIPS2010,IJCAI2011}). Indeed, issues of running time complexity and space usage are still the primary considerations when choosing clustering techniques. Yet, for clustering, such considerations are inadequacy. Different clustering algorithms often produce radically different results on the same input, and as such, differences in their input-output behavior should take precedence over computational concerns.

``The user's dilemma," has been tackled since the 70s~(\cite{fisher,wright}), yet we still do not have an adequate solution.  A formal approach to this problem (see, for example, \cite{fisher, reza, NIPS2010}) proposes that we rely on succinct mathematical properties that reveal fundamental differences in the input-output behaviour of different clustering algorithms. However there is a serious shortcoming with the current state of this literature. Virtually all the properties proposed in this framework highlight the advantages of linkage-based methods, most of which are satisfied by single-linkage -- an algorithm that often performs poorly in practice. If one were to rely on existing properties to try to select a clustering algorithm, they would inevitably select a linkage-based technique. According to these properties, there is never a reason to choose, say, algorithms based on the  $k$-means objective (\cite{steinley2006k}), which often performs well in practice. 

Of course, practitioners of clustering have known for a long time that, for many applications, variations of the $k$-means method outperform classical linkage-based techniques. Yet a lack of clarity as to why this is the case leaves the ``the user's dilemma" largely unsolved. Despite continued efforts to find better clustering methods, the ambiguous nature of clustering precludes the existence of a single algorithm that will be suited for all applications. As such, generally successful methods, such as popular algorithms for the $k$-means objective, are ill-suited for some applications. To this end, it is necessary for users of clustering to understand how clustering paradigms differ in their input-output behavior. 

Unfortunately, informal recommendations are not sufficient. Many such recommendations advise to use $k$-means when the true clusters are spherical and to apply single-linkage when they may possess arbitrary shape. 
Such advice can be misguiding, as clustering users know that single-linkage can fail to detect arbitrary-shaped clusters, and $k$-means does not always succeed when clusters are spherical. Further insight comes from viewing data as a mixture model (when variations of $k$-means, particularly EM, are known to perform well), but unfortunately most clustering users simply don't know how their data is generated. Another common way to differentiate clustering methods is to partition them into  partitional and hierarchical and to imply that users should choose algorithms based on this consideration. Although the format of the output is important, is does not go to the heart of the matter, as most clustering approaches can be expressed in  both frameworks.\footnote{For example, $k$-means can be reconfigured to output a dendrogram using Ward's method and Bisecting $k$-mean, and classical hierarchical methods can be terminated using a variety of termination conditions (\cite{Kleinberg}) to obtain a single partition instead of a dendrogram.}

\emph{The lack of formal understanding of the key differences between clustering paradigms leaves users at a loss when selecting algorithms}. In practice, many users give up on clustering altogether when a single algorithm that had been successful on a different data set fails to attain a satisfactory clustering on the current data. Not realizing the degree to which algorithms differ, and the ways in which they differ, often prevents users from selecting appropriate algorithms, or even sampling a few diverse methods. 

This, of course, need not be the case. A set of simple, succinct properties can go a long way towards differentiating between clustering techniques and assisting users when choosing a method. As mentioned earlier, the set of previously proposed properties is inadequacy. This paper  identifies the first set of properties that differentiates between some of the most popular clustering methods, while highlighting potential advantages of $k$-means (and similar) methods.  The properties are very simple, and go to the heart of the difference between some clustering methods. However, the reader should keep in mind that they are not necessarily sufficient, and that in order to have a complete solution to ``the user's dilemma" we need additional properties that identify other ways in which clustering techniques differ. The ultimate goal is to have a small set of complementary properties that together aid in the selection of clustering techniques for a wide range of applications.

The properties proposed in this paper center around the rather basic concept of how different clustering methods react to element duplication. This leads to three surprisingly simple categories, each highlighting when some clustering paradigms should be used over others. To this end, we consider a generalization of the notion of element duplication by casting the clustering problem in the weighted setting, where each element is associated with a real valued weight. Instances in the classical model can be readily mapped to the weighted framework by replacing duplicates with integer weights representing the number of occurrences of each data point. 

This generalized setting enables more accurate representation of some clustering instances. Consider, for instance, vector quantification, which aims to find a compact encoding of signals that has low expected distortion. The accuracy of the encoding is most important for signals that occur frequently. With weighted data, such a consideration is easily captured by having the weights of the points represent signal frequencies. When applying clustering to facility allocation, such as the placement of police stations in a new district, the distribution of the stations should enable quick access to most areas in the district. However, the accessibility of different landmarks to a station may have varying importance. The weighted setting enables a convenient method for prioritizing certain landmarks over others.


We formulate intuitive properties that may allow a user to select an algorithm based on how it treats weighted data (or, element duplicates).  These surprisingly simple properties are able to distinguish between classes of clustering techniques and clearly delineate instances in which some methods are preferred over others, without having to resort to assumptions about how the data may have been generated. As such, they may aid in the clustering selection process for clustering users at all levels of expertise.

Based on these properties we obtain a classification of clustering algorithms into three categories: those that are affected by weights on all data sets, those that ignore weights, and
those methods that respond to weights on some configurations of the data but not on others. Among the methods that always respond to weights
are several well-known algorithms, such as $k$-means and $k$-median. On the other hand, algorithms such as single-linkage, complete-linkage, and min-diameter ignore weights.


From a theoretical perspective, perhaps the most notable is the last category. We find that methods belonging to that category are robust to weights when data is sufficiently clusterable, and respond to weights otherwise. Average-linkage as well as the well-known spectral objective function, ratio cut,
both fall into this category. We characterize the precise conditions under which these methods are influenced by weights.



\subsection{Related Work}

Clustering algorithms are usually analyzes in the context of unweighted data. The weighted clustering framework was briefly considered in the early 70s, but wasn't developed further until now. 
\cite{fisher} introduced several properties of clustering algorithms. Among these, they include ``point proportion admissibility'', which requires that the output of an algorithm should not change if any points are
duplicated. They then observe that a few algorithms are point proportion admissible.
However, clustering algorithms can display a much wider range of behaviours on weighted data than merely satisfying or failing to satisfy point
proportion admissibility. We carry out the first extensive analysis of clustering on weighted data, characterizing the precise conditions under which
algorithms respond to weight.

In addition, \cite{wright} proposed a formalization of cluster analysis consisting of eleven axioms. In two of these axioms, the
notion of mass is mentioned. Namely, that points with zero mass can be treated as non-existent,
and that multiple points with mass at the same location are equivalent to one point with weight the sum of the masses. The idea of mass has not been developed beyond stating these axioms in their work.

Like earlier work, recent work on simple properties capturing differences in the input-output behaviour of clustering methods also focuses on the unweighed partitional (\cite{COLT2010,reza,NIPS2010, Kleinberg}) and hierarchical settings (\cite{IJCAI2011}). This is the first application of this property-based framework to weighted clustering. 

Lastly,  previous work in this line of research centers on classical linkage-based methods and their advantages. Particularly well-studied is the single-linkage algorithm, for which there are multiple property-based characterizations, showing that single-linkage is the unique algorithm that satisfies several sets of properties  (\cite{jardine1968construction, reza, carlsson2010characterization}). More recently, the entire family of linkage-based algorithms was characterized (\cite{COLT2010, IJCAI2011}), differentiating those algorithms from other clustering paradigms by presenting some of the advantages of those methods.  In addition, previous property-based taxonomies in this line of work highlight the advantages of linkage-based methods (\cite{NIPS2010,fisher}), and some early work focuses on properties that distinguish among linkage-based algorithms (\cite{hubert1975hierarchical}). Despite the emphasis on linkage-based methods in the theory literature,  empirical studies and user experience have shown that, in many cases, other techniques produce more useful clusterings than those obtained by classical linkage-based methods. Here we propose categories that distinguish between clustering paradigms while also showing when other techniques, such as popular center-based methods, may be more appropriate.

\section{Preliminaries}
A \emph{weight function} $w$ over $X$ is a function $w: X \rightarrow R^+$, mapping elements of $X$ to positive real numbers. Given a domain set $X$,
denote the corresponding weighted domain by $w[X]$, thereby associating each element $x \in X$ with weight $w(x)$.
A \emph{dissimilarity function} is a symmetric function $d: X \times X \rightarrow R^+\cup\{0\}$, such that $d(x,y) = 0$ if and only if $x=y$. We
consider \emph{weighted data sets} of the form $(w[X],d)$, where $X$ is some finite domain set, $d$ is a dissimilarity function over $X$, and $w$ is a weight function over $X$.

A \emph{k-clustering} $C = \{C_1, C_2, \ldots, C_k\}$ of a domain set $X$ is a
partition of $X$ into $1 < k < |X|$ disjoint, non-empty subsets of $X$ where $\cup_{i} C_i =
X$.  A \emph{clustering} of $X$ is a $k$-clustering for some $1 < k < |X|$.
To avoid trivial partitions, clusterings that consist of a single cluster, or where every cluster has a unique element, are not permitted.

Denote the \emph{weight of a cluster} $C_i \in C$ by $w(C_i) = \sum_{x \in C_i}w(x)$.
For a clustering $C$, let $|C|$ denote the number of clusters in $C$.
For $x,y
\in X$ and clustering $C$ of $X$, write $x \sim_C y$ if $x$ and $y$ belong
to the same cluster in $C$ and $x \not\sim_C y$, otherwise.

A \emph{partitional weighted clustering algorithm} is a function that maps a data set $(w[X], d)$ and an integer $1 < k < |X|$ to a $k$-clustering of $X$.

A \emph{dendrogram} $\mathcal{D}$ of $X$ is a pair $(T, M)$ where $T$ is a strictly binary rooted tree and $M:
leaves(T) \rightarrow X$ is a bijection. A \emph{hierarchical weighted clustering algorithm} is a function
that maps a data set $(w[X], d)$ to a dendrogram of $X$. 
A set $C_0 \subseteq X$ is a cluster in  a dendrogram $\mathcal{D} = (T,M)$ of $X$ if there exists a node
$x$ in $T$ so that $C_0 = \{M(y) \mid y \textrm{ is a leaf and a descendent of }x\}$. Two dendrogram of $X$ are \emph{equivalent} if they contain the same clusters, and $[\mathcal{D}]$ denotes the equivalence class of dendrogram $\mathcal{D}$. 

For a hierarchical weighted clustering
algorithm $\mathcal{A}$, a clustering $C = \{C_1, \ldots,
C_k\}$ \emph{appears in} $\mathcal{A}(w[X],d)$ if $C_i$ is a cluster in $\mathcal{A}(w[X],d)$ for all $1 \leq i \leq k$. A partitional algorithm $\mathcal{A}$ outputs clustering $C$ on $(w[X],d)$ if $\mathcal{A}(w[X],d,|C|)= C$.

For the remainder of
this paper, unless otherwise stated, we will use the term ``clustering algorithm" for ``weighted clustering algorithm".

The range of a partitional algorithm on a data set is the number of clusterings it outputs on that data over all weight functions.

\begin{definition}[Range (Partitional)]
Finally, given a partitional clustering algorithm $\mathcal{A}$, a data set $(X,d)$, and $1 \leq k \leq |X|$, let $range(\mathcal{A}(X,d,k)) = \{ C \mid \exists w \textrm{ such that } C= \mathcal{A}(w[X],d)\},$ i.e. the set of $k$-clusterings that $\mathcal{A}$ outputs on $(X,d)$ over all possible weight
functions.
\end{definition}

The range of a hierarchical algorithm on a data set is the number of equivalence classes it outputs on that data over all weight functions.

\begin{definition}[Range (Hierarchical)]
Given a hierarchical clustering algorithm $\mathcal{A}$ and a data set $(X,d)$, let $range(\mathcal{A}(X,d)) = \{[\mathcal{D}] \mid \exists w \textrm{ such that } \mathcal{D}= \mathcal{A}(w[X],d)\},$
i.e. the set of dendrograms that $\mathcal{A}$ outputs on $(X,d)$ over all possible weight
functions.
\end{definition}


\section{Basic Categories}\label{basic}

Different clustering algorithms exhibit radically different response to weighted data. In this section we introduce a
formal categorization of clustering algorithms based on their response to weights. This categorization identifies fundamental differences between clustering paradigms, while highlighting when some of the more empirically successful methods should be used. These simple properties can assist clustering users in selecting suitable method by simply considering how an appropriated algorithm should react to element duplication. After we introduce the three categories, we show a classification of some of well-known clustering methods according to their response to weight, summarized in Table 1.




\subsection{Weight Robust Algorithms}

We first introduce the notion of ``weight robust" algorithms. Weight robustness requires that the output of the algorithm be unaffected by changes of element weights (or, the number of occurrences of each point in the unweighted setting). This category is closely related to ``point proportion admissibility'' by \cite{fisher}. 

\begin{definition}[Weight Robust (Partitional)]
A partitional algorithm $\mathcal{A}$ is \emph{weight-robust} if for all $(X,d)$ and $1 < k < |X|$, $|range(\mathcal{A}(X,d,k))| =1$.
\end{definition}

The definition in the hierarchical setting is analogous.

\begin{definition}[Weight Robust (Hierarchical)]
A hierarchical algorithm $\mathcal{A}$ is \emph{weight-robust} if for all $(X,d)$, $|range(\mathcal{A}(X,d))| =1$.
\end{definition}

At first glance, this appears to be a desirable property. A weight robust algorithm is able to keep sight on the geometry of the data without being ``distracted" by  weights, or element duplicates. Indeed, when a similar property was proposed by \cite{fisher}, it was presented as a desirable characteristic. 

Yet, notably, few algorithms possess it (particularly single-linkage, complete-linkage, and min-diamater), while most techniques, including those with a long history of empirical success, fail this property. This brings into question how often is weight-robustness a desirable characteristic, and suggests that at least for some application  sensitivity to weights may be an advantage. 
Significantly, the popular $k$-means and similar methods fail weight robustness in a strong sense, being ``weight sensitive." 

\subsection{Weight Sensitive Algorithms}

We now introduce the definition of ``weight sensitive'' algorithms. 

\begin{definition}[Weight Sensitive (Partitional)]
A partitional algorithm $\mathcal{A}$ is \emph{weight-sensitive} if for all $(X,d)$ and $1 < k < |X|$, $|range(\mathcal{A}(X,d,k))| > 1$.
\end{definition}

The definition is analogous for hierarchical algorithms.

\begin{definition}[Weight Sensitive (Hierarchical)]
A hierarchical algorithm $\mathcal{A}$ is \emph{weight-sensitive} if for all $(X,d)$ where $|X|>2$, $|range(\mathcal{A}(X,d))| > 1$.
\end{definition}

Note that this definition is quite extreme. It means that \emph{no matter how well-separated} the clusters are, the output of a weight-sensitive algorithm can be altered by modifying some of the weights. That is, a weight-sensitive algorithm will miss arbitrarily well-separated clusters, for some weighting of its elements. In practice,  weight sensitive algorithm tend to aim for balanced cluster sizes, and so prioritize a balance in cluster sizes (or, sum of cluster weights) over separation between clusters. 

While weight-robust algorithms are interested exclusively in the geometry of the data, weight-sensitive techniques have two potentially conflicting considerations: The weight of the points and the geometry of the data. For instance, consider the data in Figure~\ref{fig:different}, which has two distinct 3-clusterings, one which provides superior separation between clusters, and another in which clusters sizes are balanced. Note how different are the two clusterings from one each other. All the weight-sensitive methods we consider select the clustering on the right, as it offers more balanced clusters. On the other hand, the weight-robust methods we studied picked the clustering on the left hand side, as it offers better cluster separation. 

Another way we could think of weight sensitive algorithm is that, unlike weight-robust methods, weight sensitive algorithms allow the weights to alter the geometry of the data. In contrast, weight robust techniques do not allow the weights of the points to ``interfere'' with the underlying geometry. 

It is important to note that there appear to be implications of these categories that apply to data that is neither weighted nor contains element duplicates. Considering the algorithms we analyzed (summarized in Table 1), the behaviour we observe on element duplicates extend to ``near-duplicates,'' which are closely positioned elements. Furthermore, the weight response of an algorithm sheds light on how it treats dense regions. In particular, weight sensitive algorithms have a tendency to "zoom in" on areas of high density, effectively ignoring sparse regions, as shown on the right-hand side of Figure~\ref{fig:different}. 

Finally, the last category considered here offers a compromise between weight-robustness and weight-sensitivity, we refer to this category as \emph{weight considering}.

\subsection{Weight Considering Algorithms}

\begin{definition}[Weight Considering (Partitional)]
A partitional algorithm $\mathcal{A}$ is \emph{weight-considering} if
\begin{itemize}
\item There exist $(X,d)$ and $1< k <|X|$ so that $|range(\mathcal{A}(X,d,k))| = 1$, and
\item There exist $(X,d)$ and $1< k <|X|$ so that $|range(\mathcal{A}(X,d,k))| > 1$.
\end{itemize}
\end{definition}

The definition carries over to the hierarchical setting as follows.

\begin{definition}[Weight Considering (Hierarchical)]
A hierarchical algorithm $\mathcal{A}$ is \emph{weight-considering} if
\begin{itemize}
\item There exist $(X,d)$ with $|X|>2$ so that $|range(\mathcal{A}(X,d))| = 1$, and
\item There exist $(X,d)$ with $|X|>2$ so that $|range(\mathcal{A}(X,d))| > 1$.
\end{itemize}
\end{definition}

Weight considering methods appear to have the best of both worlds. The weight-considering algorithms that we analyzed (average-linkage and ratio-cut), ignore weights when clusters are sufficiently well-separated and otherwise takes them into consideration. Yet, it is important to note that this is only desirable in some instances. For example, when cluster balance is critical, as may be the case for market segmentation, weight-sensitive methods may be preferable over weight considering ones. On the other hand, when the distribution may be highly bias, as is the often the case for phylogenetic analysis, weight-considering methods may offer a satisfactory compromise between weight-sensitivity and weight-robustness, allowing the algorithm to detect well-separated, possibly of radically varying sized, without entirely disregarding weights. Notably, of all the classical clustering algorithms studied here, average-linkage, a weight-considering technique, is the only one that is commonly applied to phylogenetic analysis. 

\begin{figure}
\begin{center}
\includegraphics[scale=0.9]{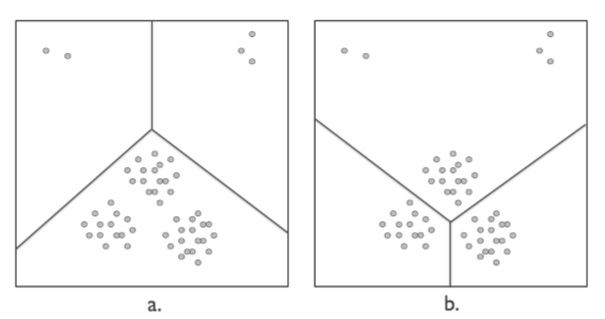} 
\end{center}
\caption{An example of different cluster structures in the same data, illustrating inherent tradeoffs between separation and cluster balance. The clustering on the left finds inherent structure in the data by identifying well-separated partitions, while the clustering on the right discovers structure in the data by focusing on the dense region, achieving more balanced cluster sizes. The correct partitioning depends on the application at hand. }
\label{fig:different}
\end{figure}

The following  table presents a classification of classical clustering methods based on these three categories. Sections~\ref{functions} and \ref{sec:Hierachical} provide the proof for the results summarized below. In addition, these sections also characterizes precisely when the weight-considering techniques studied here respond to weights. An expanded table that includes heuristics, with a corresponding analysis, is included in Section~\ref{heuristics}.

\begin{table*}[ht]
\begin{center}
\begin{tabular}[ht!]{||l||l|l|l||}
\hline
\hline
 & {\textbf{Partitional}}  & {\textbf{Hierarchical}} \\
\hline
\hline
\textbf{Weight} &  $k$-means, $k$-medoids& Ward's method\\
\textbf{Sensitive}& $k$-median, Min-sum& Bisecting $k$-means \\
 \hline
\textbf{Weight} &  &   \\
\textbf{Considering} & Ratio-cut & Average-linkage \\
 \hline
\textbf{Weight} & Min-diameter & Single-linkage   \\
\textbf{Robust}   & $k$-center & Complete-linkage \\
\hline
\hline
\end{tabular}
\end{center}
\caption{A classification of clustering algorithms based on their response to weighted data.}
\label{summary}
\end{table*}

To formulate clustering algorithms in the weighted setting, we consider their behaviour on
data that allows duplicates. Given a data set $(X,d)$, elements $x, y \in X$ are
duplicates if $d(x,y) = 0$ and $d(x,z) = d(y, z)$ for all $z \in X$. In a Euclidean space, duplicates
correspond to elements that occur at the same location. We obtain the weighted version of a data set
by de-duplicating the data, and associating every element with a weight equaling the number of
duplicates of that element in the original data. The weighted version of an algorithm partitions the
resulting weighted data in the same manner that the unweighted version partitions the original data.
As shown throughout the paper, this translation leads to natural formulations of weighted
algorithms.


\section{Partitional Methods}\label{functions}

In this section, we show that partitional clustering algorithms respond to weights in a variety of ways. Many popular partitional clustering paradigms, including $k$-means, $k$-median, and
min-sum, are weight sensitive. It is easy to see that methods such as min-diameter and $k$-center are weight-robust. We begin by analysing the behaviour of a spectral objective function ratio cut, which exhibits interesting behaviour on weighted data by responding to weight unless data is highly structured.

\subsection{Ratio-Cut Clustering}\label{sec:ratio}

We investigate the behaviour of a spectral objective function, ratio-cut (\cite{von2007tutorial}), on weighted data. Instead of a dissimilarity function, spectral clustering relies on a \emph{similarity function}, which maps pairs of domain elements to non-negative real numbers that represent how alike the elements are.
The ratio-cut of a clustering $C$ is:
\[
\textrm{cost}_{rcut}(C, w[X],s) = \frac{1}{2} \sum_{C_i \in C} \frac{\sum_{x \in C_i, y \in X \backslash C_i} s(x, y) \cdot w(x) \cdot w(y)}{\sum_{x \in
C_i}w(x)}.
\]

The ratio-cut clustering function is:
\[
\textrm{rcut}(w[X],s,k) = \arg \min_{C;|C|=k}\textrm{cost}_{rcut}(C,w[X],s).
\]
We prove that this function ignores data weights only when the data satisfies a very strict notion of clusterability. To characterize precisely when ratio-cut responds to weights, we first present a few definitions.

A clustering $C$ of $(w[X],s)$ is \emph{perfect} if for all $x_1,x_2,x_3,x_4 \in X$ where $x_1 \sim_C x_2$ and $x_3 \not\sim_C x_4$, $s(x_1,x_2) > s(x_3,x_4)$. $C$ is \emph{separation-uniform} if there exists $\lambda$ so that for all $x, y \in X$ where $x \not\sim_C y$, $s(x,y) = \lambda$. Note that neither condition depends on the weight function.

We show that whenever a data set has a clustering that is both  perfect and separation-uniform, then
ratio-cut uncovers that clustering, which implies that ratio-cut is not weight-sensitive. Note that, in particular,
these conditions are satisfied when all between-cluster similarities are set to zero. On the other
hand, we show that ratio-cut does respond to weights when either condition fails.

\begin{lemma}\label{uniform}
If, given a data set $(X,d)$, $1<k<|X|$ and some weight function $w$, ratio-cut outputs a $k$-clustering $C$ that is not separation-uniform and where every cluster has more than a single point, then $|range(\textrm{ratio-cut}(X,d))| > 1$.
\end{lemma}
\begin{proof}
We consider two cases. \\

\textbf{Case 1}: There is a pair of clusters with different similarities between them. Then there exist $C_1, C_2 \in C$, $x \in C_1$, and $y \in C_2$ so that $s(x,y)
\geq s(x, z)$ for all $z \in C_2$, and there exists $a \in C_2$ so that $s(x,y)
> s(x, a)$.

Let $w$ be a weight function such that $w(x) = W$ for some sufficiently large
$W$ and weight $1$ is assigned to all other points in $X$. Since we can set $W$ to be arbitrarily large, when looking at the cost of a cluster,
it suffices to consider the dominant term in terms of $W$.
We will show that we can improve the cost of $C$ by moving a point from $C_2$ to $C_1$. Note
that moving a point from $C_2$ to $C_1$ does not affect the dominant term of clusters other than
$C_1$ and $C_2$. Therefore, we consider the cost of these two clusters before and after
rearranging points between these clusters.

Let $A  = \sum_{a \in
C_2}s(x,a)$ and let $m = |C_2|$. Then the dominant term, in terms of $W$, of the cost of $C_2$ is $W\left(\frac{A}{m}\right)$. The cost of $C_1$ approaches a constant as $W \to \infty$.


Now consider clustering $C'$ obtained from $C$ by moving $y$ from
cluster $C_2$ to cluster $C_1$. The dominant term in the cost of $C_2$ becomes $W\left(\frac{A - s(x,y)}{m-1}\right)$, and the cost of $C_1$
approaches a constant as $W \to \infty$. By choice of $x$ and $y$,
if $\frac{A - s(x,y)}{m-1} < \frac{A}{m}$ then $C'$ has lower loss than $C$ when
$W$ is large enough. The inequality $\frac{A - s(x,y)}{m-1} < \frac{A}{m}$ holds when $\frac{A}{m} < s(x,y)$, and the
latter holds by choice of $x$ and $y$. \\

\textbf{Case 2}: The similarities between every pair of clusters are the same.
However, there are clusters $C_1, C_2, C_3 \in C$, so that the
similarities between $C_1$ and $C_2$ are greater than the ones between
$C_1$ and $C_3$. Let $a$ and $b$ denote the similarities between $C_1, C_2$
and $C_1, C_3$, respectively.


Let $x \in C_1$ and $w$ a weight function, such that $w(x) = W$ for
large $W$, and weight $1$ is assigned to all other points in
$X$. The dominant term comes from clusters going into $C_1$, specifically edges that
include point $x$.  The dominant term of the contribution of cluster $C_3$ is
$Wb$ and the dominant term of the contribution of $C_2$ is $Wa$, totaling $Wa +
Wb$.

Now consider clustering $C'$ obtained from clustering $C$ by merging $C_1$ with
$C_2$, and splitting $C_3$ into two clusters (arbitrarily). The dominant term of
the clustering comes from clusters other than $C_1 \cup C_2$, and the cost of
clusters outside $C_1 \cup C_2 \cup C_3$ is unaffected. The dominant term of
the cost of the two clusters obtained by splitting $C_3$ is $Wb$ for each, for a
total of $2Wb$. However, the factor of $Wa$ that $C_2$ previously contributed is no
longer present. This replaces the coefficient of the dominant term from
$a + b$ to $2b$, which improved the cost of the clustering because $b<a$.
\end{proof}

\begin{lemma}\label{notPerfect}
If, given a data set $(X,d)$, $1 < k < |X|$, and some weight function $w$, ratio-cut outputs a clustering $C$ that is not perfect and where every cluster has more than a single point, then $|range(\textrm{ratio-cut}(X,d,k))| > 1$.
\end{lemma}
\begin{proof}
If $C$ is also not separation-uniform, then Lemma~\ref{uniform} can be applied, and so we can assume that $C$ is separation-uniform.
Then there exists a within-cluster similarity in $C$ that is smaller than
some between-cluster similarity, and all between cluster similarities are the
same.
Specifically, there exist clusters $C_1$ and $C_2$, such that all the similarities
between $C_1$ and $C_2$ are $a$, and there exist $x,y \in C_1$ such that $s(x,y) <
a$. Let $b = s(x,y)$.

Let $w$ be a weight function such that $w(x) = W$ for large
$W$, and weight $1$ is assigned to all other points in $X$. Then the dominant
term in the cost of $C_2$ is $Wa$, which comes from the cost of points in
cluster $C_2$ going to point $x$. The cost of $C_1$ approaches a constant as $W \rightarrow \infty$.

Consider the clustering $C'$ obtained from $C$ by completely re-arranging all
points in $C_1,C_2 \in C$ as follows:
\begin{enumerate}
\item Let $\{y,z\} = C_1'$ be one cluster for any $z \in C_2$.
\item Let $C_2' = (C_1 \cup C_2) \setminus C_1'$ be the new second cluster.
\end{enumerate}

The dominant term in of $C_1'$, which comes from
the cost of points in cluster $C_1'$ going to point $x$, is $\left(\frac{a+b}{2}\right)W$, which is
smaller than $Wa$ since $a> b$. Note that the cost of each cluster outside of
$C_1' \cup C_2'$ remains unchanged. Since $x \in C_1'$, the cost of $C_1'$ approaches a constant as $W \rightarrow \infty$.  Therefore, $\textrm{cost}_{rcut}(C',w[X],s)$ is smaller
than $\textrm{cost}_{rcut}(C,w[X],s)$ when $W$ is sufficiently large.
\end{proof}

\begin{lemma}\label{rcut clusterable}
Given any data set $(w[X],s)$ and $1 < k <|X|$ that has a perfect, separation-uniform $k$-clustering $C$, ratio-cut$(w[X],s,k) = C.$
\end{lemma}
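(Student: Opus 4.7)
The plan is to show that $C$ is the unique minimizer of $\textrm{rcut}(\cdot, w[X], s)$ among $k$-clusterings, via a direct computation of the cost on $C$ followed by a matching lower bound on every competitor, with the strict inequality from perfectness forcing uniqueness.

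First I would evaluate $\textrm{rcut}(C, w[X], s)$ using separation-uniformity. Let $W = \sum_{x \in X} w(x)$ and let $\lambda$ be the common between-cluster similarity. Then
\[
\sum_{x \in C_i,\, y \in \bar{C_i}} s(x,y)\, w(x)\, w(y) \;=\; \lambda \cdot w(C_i)\bigl(W - w(C_i)\bigr),
\]
so the $i$-th term in the ratio-cut sum collapses to $\lambda\bigl(W - w(C_i)\bigr)$. Summing over the $k$ clusters and including the overall $1/2$ factor gives $\textrm{rcut}(C, w[X], s) = \lambda(k-1)W/2$.

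Next I would lower bound $\textrm{rcut}(C', w[X], s)$ for an arbitrary $k$-clustering $C'$. The key observation is that perfectness together with separation-uniformity forces $s(x,y) \geq \lambda$ for every pair $x \neq y$: within-$C$ pairs satisfy $s(x,y) > \lambda$ by perfectness, and between-$C$ pairs satisfy $s(x,y) = \lambda$ by separation-uniformity. Consequently, for each $C'_i$,
\[
\sum_{x \in C'_i,\, y \in \bar{C'_i}} s(x,y)\, w(x)\, w(y) \;\geq\; \lambda \cdot w(C'_i)\bigl(W - w(C'_i)\bigr),
\]
and the same summation as before yields $\textrm{rcut}(C', w[X], s) \geq \lambda(k-1)W/2 = \textrm{rcut}(C, w[X], s)$.

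The one step I would treat with care is the uniqueness of the minimizer: equality in the bound above demands $s(x,y) = \lambda$ for every pair separated by $C'$, but perfectness asserts $s(x,y) > \lambda$ whenever $x \sim_C y$, so no pair joined by $C$ can be split by $C'$. Thus every $C$-cluster sits inside a single $C'$-cluster, meaning $C'$ is a coarsening of $C$; since $|C'| = |C| = k$, we must have $C' = C$. Together, the three steps identify $C$ as the unique $k$-clustering minimizing the ratio-cut objective, so $\textrm{rcut}(w[X], s, k) = C$, as required. The rest is routine arithmetic; no real obstacle arises beyond noting that the strict inequality in the definition of perfect is precisely what converts the minimality of $C$ into uniqueness.
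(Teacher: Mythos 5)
Your proof is correct and follows essentially the same route as the paper: compute $\textrm{rcut}(C)=\lambda(k-1)w(X)/2$ via separation-uniformity, then use $s(x,y)\geq\lambda$ everywhere to lower-bound any competitor, with perfectness supplying the strict inequality. Your coarsening argument for why any $C'\neq C$ must split some within-$C$ pair is in fact a more careful justification of the step the paper dispatches with the bare assertion that ``the perfect clustering is unique.''
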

\begin{proof}
Let $(w[X],s)$ be a weighted data set, with a perfect, separation-uniform clustering $C = \{C_1, \ldots, C_k\}$.
Recall that for any $Y \subseteq X$, $w(Y) = \sum_{y \in Y} w(y)$. Then:

\begin{align*}
&\textrm{cost}_{rcut}(C, w[X],s) = \frac{1}{2} \sum_{i=1}^{k} \frac{\sum_{x \in C_i} \sum_{y \in \overline{C_i}} s(x,y) w(x) w(y)}{\sum_{x \in C_i} w(x)}\\
& = \frac{1}{2} \sum_{i=1}^{k} \frac{\sum_{x \in C_i} \sum_{y \in \overline{C_i}} \lambda w(x) w(y)}{\sum_{x \in C_i}
w(x)} \\
& = \frac{\lambda}{2} \sum_{i=1}^{k} \frac{\sum_{y \in \overline{C_i}} w(y) \sum_{x \in C_i} w(x)}{\sum_{x \in C_i} w(x)}
= \frac{\lambda}{2} \sum_{i=1}^{k} \sum_{y \in \overline{C_i}} w(y)\\
& = \frac{\lambda}{2} \sum_{i=1}^{k} w(\overline{C_i}) = \frac{\delta}{2} \sum_{i=1}^{k} [w(X) - w(C_i)] \\
& = \frac{\lambda}{2} \left(k w(X) - \sum_{i=1}^{k}w(C_i)\right) = \frac{\lambda}{2} (k - 1) w(X).
\end{align*}

Consider any other clustering, $C^{'} = \{C_1^{'}, \ldots, C_{k}^{'}\} \neq C$. Since $C$ is both perfect and separation-uniform, all between-cluster similarities in $C$ are $\lambda$, and all within-cluster similarities are greater than $\lambda$. From here it follows that all pair-wise similarities in the data are at least $\lambda$.  Since $C'$ is a $k$-clustering different from $C$, it must differ from $C$ on at least one between-cluster edge, so that edge must be greater than $\lambda$.
Thus the cost of $C^{'}$ is:
\begin{align*}
& \textrm{cost}_{rcut}(C^{'},w[X],s) =
\frac{1}{2} \sum_{i=1}^{k} \frac{\sum_{x \in C_i^{'}} \sum_{y \in \overline{C_i^{'}}} s(x,y) w(x) w(y)}{\sum_{x \in C_i^{'}} w(x)}
\\
& > \frac{1}{2} \sum_{i=1}^{k} \frac{\sum_{x \in C_i^{'}} \sum_{y \in \overline{C_i^{'}}} \lambda w(x) w(y)}{\sum_{x \in C_i^{'}} w(x)}\\
&= \frac{\lambda}{2} (k - 1) w(X)
= \textrm{cost}_{rcut}(C).
\end{align*}
Thus clustering $C^{'}$ has a higher cost than $C$.
\end{proof}


It follows that ratio-cut responds to weights on all data sets except those where it is possible to obtain cluster separation that is both very large and highly uniform. This implies that ratio cut is highly unlikely to be unresponsive to weights in practice. 

Formally, we have the
following theorem, which gives sufficient conditions for when ratio-cut ignores weights, as well conditions that make this function respond to weights. 

\begin{theorem}
Given any $(X,d)$ and $1 < k < |X|$, 
\begin{enumerate}
\item if $(X,d)$ has a clustering that is both perfect and separation uniform, then $$|range(\textrm{Ratio-cut}(X,s,k))|=1,$$ and
\item  if $range(\textrm{Ratio-cut}(X,s,k))$ includes a clustering $C$ that is not perfect, not separation uniform, and has no singleton clusters, then $|range(\textrm{Ratio-cut}(X,s,k))|>1.$
\end{enumerate}
\end{theorem}
\begin{proof}
The result follows by Lemma~\ref{uniform}, Lemma~\ref{notPerfect}, and Lemma~\ref{rcut clusterable}.
\end{proof}


\subsection{$K$-Means}
Many popular partitional clustering paradigms, including $k$-means (see~\cite{steinley2006k} for a detailed exposition of this popular objective function and related algorithms), $k$-median, and
the min-sum objective (\cite{sahni1976p}), are weight sensitive. Moreover, these algorithms satisfy a stronger condition. By modifying weights, we can make these algorithms separate any set of points. We call such algorithms
\emph{weight-separable}.

\begin{definition}[Weight Separable]
A partitional clustering algorithm $\mathcal{A}$ is \emph{weight-separable} if for any data set
$(X,d)$ and any $S \subset X$, where $2 \leq |S| \leq k$, there exists a
weight function $w$ so that $x \not\sim_{\mathcal{A}(w[X],d, k)} y$ for all distinct $x,y \in S$.
\end{definition}

Note that every weight-separable algorithm is also weight-sensitive.
\begin{lemma}\label{separable}
If a clustering algorithm $\mathcal{A}$ is weight-separable, then $\mathcal{A}$ is weight-sensitive.
\end{lemma}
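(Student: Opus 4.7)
The plan is to unpack Definition~\ref{weight responsive} for an arbitrary clustering $C \in \text{range}(\mathcal{A}(X,d))$ and exhibit the two required weight functions; the likely intended reading of the lemma is that $\mathcal{A}$ is weight-responsive on every $C$ in its range (i.e.\ weight-sensitive), which is what the argument will establish. The first clause of Definition~\ref{weight responsive} is free: because $C \in \text{range}(\mathcal{A}(X,d))$, there is by definition some $w$ with $\mathcal{A}(w[X],d) = C$. So all the work goes into producing a $w'$ with $\mathcal{A}(w'[X],d) \neq C$.

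For that second clause, I would first observe that any $k$-clustering $C$ in the sense of the paper satisfies $k < |X|$, so by pigeonhole at least one cluster $C_i \in C$ must contain two distinct points $x, y$ (otherwise every cluster would be a singleton and $|C| = |X|$, which is disallowed). I then apply the weight-separability hypothesis to the data set $(X,d)$ and the set $S = \{x,y\}$, which satisfies $2 \leq |S| \leq k$, to obtain a weight function $w'$ with $x \not\sim_{\mathcal{A}(w'[X],d,k)} y$. Since $x \sim_C y$ by construction, the $k$-clusterings $C$ and $\mathcal{A}(w'[X],d,k)$ differ on the pair $\{x,y\}$ and are therefore unequal as partitions, so $\mathcal{A}(w'[X],d) \neq C$.

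There is not really a hard step here: the argument is essentially a direct chase through the definitions of weight-responsive, weight-separable, range, and $k$-clustering. The only subtle point is the non-singleton-cluster observation above, together with the bookkeeping that the $k$ used in weight-separability matches the value of $|C|$ so that the partition produced by weight-separability is being compared with $C$ at the same cluster count. Given that, the proof should be a short two- or three-sentence argument.
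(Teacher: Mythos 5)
Your proof is correct and follows essentially the same route as the paper's: take any $C$ in the range, pick $x \sim_C y$, and invoke weight-separability on $S=\{x,y\}$ to obtain a $w'$ whose output separates them. The only difference is that you explicitly justify the existence of a non-singleton cluster via $k<|X|$, a detail the paper leaves implicit.
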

\begin{proof}
Given any $(X,d)$ and weight function $w$ over $X$, let $C = \mathcal{A}(w[X],d,k)$. Select points $x$ and $y$ where $x \sim_C y$.
Since  $\mathcal{A}$ is weight-separable, there exists $w'$ so that $x
\not\sim_{\mathcal{A}(w'[X],d, k)} y$, and so $\mathcal{A}(w'[X],d,k) \neq C.$ It follows that for any $(X,d)$, $|range(\mathcal{A}(X,d))|>1.$
\end{proof}

$K$-means is perhaps the most popular clustering objective function, with cost: $$k\textrm{-means}(C,w[X],d) = \sum_{C_i \in C} \sum_{x \in C_i}{d(x, cnt(C_i)})^2,$$ where $cnt(C_i)$ denotes the center of mass of cluster $C_i$.
The $k$-means objective function finds a clustering with minimal $k$-means cost. We show that $k$-means is weight-separable, and thus also weight-sensitive.

\begin{theorem}\label{thm:kmeans_sep}
The $k$-means objective function is weight-separable.
\end{theorem}
\begin{proof}
Consider any $S \subseteq X$. Let $w$ be a weight function over $X$ where $w(x) = W$ if $x \in S$, for large $W$,
and $w(x) = 1$ otherwise. As shown by \cite{ostrovsky}, the $k$-means objective function is equivalent to
$$\frac{ \sum_{x,y\in C_i}d(x,y)^2\cdot w(x)\cdot w(y)}{w(C_i)}.$$

Let $m_1 = \min_{x,y \in X}d(x,y)^2 > 0$, $m_2 = \max_{x,y \in X}d(x,y)^2$, and $n = |X|$.
Consider any $k$-clustering $C$ where all the elements in $S$ belong to distinct clusters.
Then we have:
\[
k\textrm{-means}(C, w[X],d) < km_2\left(n+\frac{n^2}{W}\right).
\]
On the other hand, given any $k$-clustering $C'$ where at least two elements of $S$ appear in the same cluster, $k\textrm{-}means(C',w[X],d) \geq \frac{W^2m_1}{W+n}$.
Since
\[
\lim_{W \to \infty} \frac{k\textrm{-means}(C', w[X], d) }{k\textrm{-means}(C, w[X], d) } = \infty,
\]
$k$-means separates all the elements in $S$ for large enough $W$.
\end{proof}

Min-sum is another well known objective function and it minimizes the expression: $$\sum_{C_i \in C} \sum_{x,y \in C_i}d(x,y)\cdot w(x)\cdot
w(y).$$

\begin{theorem}\label{thm:min_sum_sep}
Min-sum is weight-separable.
\end{theorem}
\begin{proof}
Let $(X,d)$ be any data set and $1 <k<|X|$.
Consider any $S \subseteq X$ where $1 < |S| \leq k$. Let $w$ be a weight function over $X$ where $w(x) = W$ if $x \in S$, for large $W$,
and $w(x) = 1$ otherwise. Let $m_1 = \min_{x,y \in X}d(x,y)$ be the minimum dissimilarity in $(X,d)$, and let $m_2 = \max_{x,y \in X}d(x,y)$ be the maximum dissimilarity in $(X,d)$. 

Then the cost of any cluster that includes two elements of $S$ is a least $m_1W^2$, while the cost of a cluster that includes at most one element of $S$ is less than $m_2|X|(|X|+W)$. So when $W$ is large enough, selects a partition where no two elements of $S$ appear in the same cluster.
\end{proof}

Several other objective functions similar to $k$-means are also weight-separable. We show that $k$-median and $k$-medoids are weight sensitive by analysing center-based approaches that use exemplars from the data  as cluster centers (as opposed to any elements in the underlying space). Given a set $T \subseteq X$, define $C(T)$ to be the clustering obtained by assigning every element in $X$ to the closest element (``center'') in $T$.

Exemplar-based clustering is defined as follows.

\begin{definition}[Exemplar-based]
An algorithm $\mathcal{A}$ is \emph{exemplar-based} if there exists a function
$f: R^+ \rightarrow R^+$ such that for all $(w[X],d)$, $\mathcal{A}(w[X], d,k) = C(T)$ where
\[
T =\arg \min_{T \subset X;|T|=k} \sum_{x \in X, x \not\in T} w(x) f(\min_{y \in T}d(x,y)).
\]
\end{definition}
Note that when $f$ is the identity function, we obtain $k$-median, and when $f(x) = x^2$ we obtain $k$-medoids.

\begin{theorem}
Every exemplar-based clustering function is weight-separable.
\end{theorem}
\begin{proof}
Consider any $S \subseteq X$ with $|S|\leq k$. For all $x \in S$, set $w(x) = W$ for some large
value $W$ and set all other weights to $1$. Recall that a clustering has between $2$ and $|X|-1$
clusters.  Consider any clustering $C$ where some distinct elements $x,y \in S$ belong to the same
cluster $C_i \in C$. Since at most one of $x$ or $y$ can be the center of $C_i$, the cost of $C$ is
at least $W\cdot\min_{x_1,x_2 \in C_i}f(d(x_1,x_2))$. Observe that $f(d(x_1,x_2))>0$.

Consider any clustering $C'$ where all the elements in $S$ belong to distinct clusters. If a cluster contains a unique element $x$ of $S$, then its cost is constant in $W$ if $x$ is the cluster center, and at least $W\cdot \min_{x_1,x_2 \in X}f(d(x_1,x_2))$ if $x$ is not the center of the cluster. This shows that if $W$ is large enough, then every element of $S$ will be a cluster center, and so the cost of $C'$ would be independent of $W$. So when $W$ is large, clusterings that separate elements of $S$ have lower cost than those that merge any points in $S$.
\end{proof}

We have the following corollary.

\begin{corollary}
The $k$-median and $k$-medoids objective functions are weight-separable and weight-sensitive.
\end{corollary}






\section{Hierarchical Algorithms}~\label{sec:Hierachical}

Similarly to partitional methods, hierarchical algorithms also exhibit
a wide range of responses to weights. We show that Ward's method (\cite{ward1963hierarchical}), a successful linkage-based algorithm, as well as popular divisive hierarchical methods, are weight sensitive. On the other hand, it is easy to see that the linkage-based algorithms single-linkage and complete-linkage are both weight robust, as was observed in \cite{fisher}.

Average-linkage, another popular linkage-based method, exhibits more nuanced behaviour on weighted data.
When a clustering satisfies a reasonable notion of clusterability, average-linkage detects that clustering irrespective of weights. On the other hand, this algorithm responds to weights on all other clusterings. We note that the notion of clusterability required for average-linkage is much weaker than the notion of clusterability used to characterize the behaviour of ratio-cut on weighted data.


\subsection{Average Linkage}

Linkage-based algorithms start by placing each element in its own cluster, and proceed by repeatedly merging the ``closest'' pair of clusters until the entire dendrogram
is constructed.  To identify the closest clusters, these algorithms use a
linkage function that maps pairs of clusters to a real number. Formally, a \emph{linkage function}  is a function \begin{small}$\ell:
\{(X_1, X_2,d,w) \mid d,w \textrm{ over } X_1\cup X_2\} \rightarrow R^+.$\end{small}

Average-linkage is one of the most popular linkage-based algorithms (commonly applied in
bioinformatics under the name Unweighted Pair Group Method with Arithmetic Mean). Recall that $w(X) = \sum_{x \in X}w(x)$. The average-linkage
linkage function is $$\ell_{AL}(X_1, X_2,d,w) =
\frac{\sum_{x \in X_1, y\in X_2}  d(x, y) \cdot w(x) \cdot w(y)}{w(X_1)\cdot w(X_2)}.$$

To study how average-linkage responds to weights, we give a relaxation of the notion of a perfect clustering.

\begin{definition}[Nice]
A clustering $C$ of $(w[X],d)$ is \emph{nice} if for all $x_1,x_2,x_3 \in X$ where $x_1 \sim_C x_2$ and $x_1 \not\sim_C x_3$, $d(x_1,x_2) < d(x_1,x_3)$.
\end{definition}

Data sets with nice clusterings correspond to those that satisfy the ``strict separation'' property
introduced by Balcan \emph{et al.} \cite{balcan}. As for a perfect clustering, being a nice clustering is independent of weights. Note that all perfect clusterings are nice, but not all nice clusterings are perfect.
A dendrogram is \emph{nice} if all clusterings that appear in it are nice.


We present a complete characterisation of the way that average-linkage (AL) responds to weights.

\begin{theorem}\label{average linkage}
Given $(X,d)$, $|range(AL(X,d))|=1$ if and only if $(X,d)$ has a nice dendrogram.
\end{theorem}
\begin{proof}
We first show that if a data set has a nice dendrogram, then this is the dendrogram that average-linkage outputs. Note that the property of being nice is independent of the weight function. So, the set of nice clusterings of any data set $(w[X],d)$ is invariant to the weight function $w$.
Lemma~\ref{Average nice detecting} shows that, for every $(w[X],d)$, every nice clustering in $(w[X],d)$ appears in the dendrogram produce by average-linkage.

Let $(X,d)$ be a data set that has a nice dendrogram $D$. We would like to show that average-linkage outputs that dendrogram.
Let $\mathcal{C}$ be the set of all nice clusterings of $(X,d)$. Let $\mathcal{L} = \{c \mid \exists C \in \mathcal{C} \textrm{ such that } c \in C\}.$ That is, $\mathcal{L}$ is the set of all clusters that appear in some nice clustering of $(X,d)$.

Since $D$ is a nice dendrogram of $(X,d)$, all clusterings that appear in it are nice, and so it contains all the clusters in $\mathcal{L}$ and no additional clusters. In order to satisfy the condition that every nice clustering of $(X,d)$ appears in the dendrogram, $D_{AL}$, produced by average-linkage, $D_{AL}$ must have all clusters in $\mathcal{L}$.

Since a dendrogram is a strictly binary tree, any dendrogram of $(X,d)$ has exactly $|X|-1$ inner nodes. In particular, all dendrograms of the same data set have exactly the same number of inner nodes. This implies that $D_{AL}$ has the same clusters as $D$, so $D_{AL}$ is equivalent to $D$.


Now, let $(X,d)$ be a data set that does not have a nice dendrogram. Then, given any $w$ over $X$, there is a clustering $C$ that is not nice that appears in $AL(w[X],d)$.
Lemma~\ref{AL on not nice} shows that if a clustering that is not nice appears in $AL(w[X],d)$, then $|range(AL(X,d))|>1$.

\end{proof}

Theorem~\ref{average linkage} follows from the two lemmas below.

\begin{lemma}\label{Average nice detecting}
Given any weighted data set $(w[X],d)$, if $C$ is a nice clustering of $(X,d)$, then $C$ is in
the dendrogram produced by average-linkage on $(w[X],d)$.
\end{lemma}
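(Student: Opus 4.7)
The plan is to show by induction on the merge steps of average-linkage that every current cluster is either a subset of some $C_i$ or a union of entire $C_i$'s. Equivalently, I maintain the invariant that for every $C_i \in C$ and every cluster $X$ in the current partition, either $X \cap C_i = \emptyset$, $X \subseteq C_i$, or $X \supseteq C_i$. The base case is immediate since the initial partition consists of singletons.

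The key inequality needed for the inductive step, which follows directly from niceness, is the following: for any two pure subsets $X_1, X_2 \subseteq C_i$ in the current partition with $X_1 \neq X_2$, and any current cluster $X_3$ disjoint from $C_i$, $\ell_{AL}(X_1, X_2, d, w) < \ell_{AL}(X_1, X_3, d, w)$. This is because niceness gives $d(x, y) < d(x, z)$ whenever $x \in X_1$, $y \in X_2$, $z \in X_3$ (each $x$ satisfies $x \sim_C y$ and $x \not\sim_C z$), and taking the weighted average of these pointwise inequalities with weights proportional to $w(x)w(y)$ on the left and $w(x)w(z)$ on the right yields the desired linkage comparison.

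With this inequality in hand, the inductive step proceeds by case analysis on the merge $(X_a, X_b)$ chosen by the algorithm. If $X_a$ and $X_b$ both lie in a single $C_i$, or both are unions of entire $C_i$'s, the invariant is immediately preserved. Otherwise, the partition property together with the inductive invariant reduces the situation to the case where $X_a \subseteq C_i$ with $X_a \neq C_i$, and $X_b$ is disjoint from $C_i$ (any overlap of $X_b$ with $C_i$ would, by the invariant, force $X_b \subseteq C_i$ or $X_b \supseteq C_i \supseteq X_a$, contradicting either the current case or the partition property). Since the invariant then forces $C_i$ to be partitioned into pure subsets of itself in the current partition and $X_a$ does not exhaust $C_i$, there exists another pure subset $X_c \subseteq C_i$ in the partition, and the key inequality gives $\ell_{AL}(X_a, X_c, d, w) < \ell_{AL}(X_a, X_b, d, w)$, contradicting the algorithm's minimum-linkage choice.

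To conclude that each $C_i$ itself appears in the dendrogram, I observe that by the invariant, whenever $C_i$ is not yet a current cluster it is partitioned into pure subsets of itself, and by the above case analysis, any merge involving a pure proper subset of $C_i$ must be with another pure subset of $C_i$. Hence the number of pure subsets of $C_i$ in the current partition never increases, decreases by one each time a merge touches it, and must eventually reach $1$; the unique remaining pure subset is then $C_i$ itself, which is therefore a cluster in the dendrogram produced by average-linkage. I expect the main obstacle to be the case analysis in the inductive step, specifically using the invariant carefully to rule out every ``mixed'' configuration of $X_a$ and $X_b$ before invoking the key inequality; the inequality itself and the final counting argument are straightforward once the invariant is in place.
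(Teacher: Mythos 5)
Your proposal is correct and rests on exactly the same key inequality as the paper's proof: for disjoint $X_1,X_2\subseteq C_i$ and $X_3\subseteq C_j$, niceness forces $\ell_{AL}(X_1,X_2,d,w)<\ell_{AL}(X_1,X_3,d,w)$ (the paper derives it by bounding each linkage value through $\max_{x_2}d(x_1,x_2)$ and $\min_{x_3}d(x_1,x_3)$ conditioned on $x_1$, which is the careful form of your ``weighted average of pointwise inequalities''). The only difference is that you spell out the induction on merge steps and the purity invariant that the paper compresses into ``it suffices to show''; this is a welcome elaboration, not a different argument.
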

\begin{proof}
Consider a nice clustering $C = \{C_1, \ldots, C_k\}$ over $(w[X],d)$.
It suffices to show that for any $1 \leq i < j \leq k$,  $X_1, X_2 \subseteq C_i$ where $X_1 \cap X_2 = \emptyset$ and $X_3 \subseteq C_j$,
$\ell_{AL}(X_1, X_2,d,w) < \ell_{AL}(X_1, X_3,d,w)$.
It can be shown that $$\ell_{AL}(X_1,X_2,d,w) \leq \frac{\sum_{x_1 \in X_1} w(x_1)\cdot\max_{x_2 \in X_2}d(x_1,x_2)}{w(X_1)}$$
and $$\ell_{AL}(X_1,X_3,d, w) \geq \frac{\sum_{x_1 \in X_1} w(x_1)\cdot\min_{x_3 \in X_3}d(x_1,x_3)}{w(X_1)}.$$
Since $C$ is nice, it follows that
$$\min_{x_3 \in X_3}d(x_1,x_3)> \max_{x_2 \in X_2}d(x_1,x_2)$$
Thus $\ell_{AL}(X_1,X_3)> \ell_{AL}(X_1, X_2)$, which completes the proof.
\end{proof}

\begin{lemma}\label{AL on not nice}
For any data set $(X,d)$ and any weight function $w$ over $X$, if a clustering that is not nice appears in $AL(w[X],d)$, then $|range(AL(X,d))|>1$.
\end{lemma}
\begin{proof}
Let $(X,d)$ be a data set so that a clustering $C$ that is not nice appears in $AL(w[X],d)$ for some weight function $w$ over $X$.  We construct $w'$ so that $C \not\in AL(w'[X],d)$, which would show that $|range(AL(X,d))|>1$.

Since $C$ is not nice, there exist $1 \leq i,j \leq k$, $i \neq j$, and $x_1, x_2 \in C_i$, $x_1 \neq x_2$, and $x_3 \in C_j$, so that
$d(x_1,x_2) > d(x_1,x_3)$.

Now, define weight function $w'$ as follows: $w'(x) = 1$ for all $x \in X \setminus \{x_1,x_2, x_3\}$, and
$w'(x_1) = w'(x_2) =  w'(x_3) = W$, for some large value $W$.  We argue that when $W$ is sufficiently large,
$C$ is not a clustering in $AL(w'[X],d)$.

By way of contradiction, assume that $C$ is a clustering in $AL(w'[X],d)$ for
any setting of $W$. Then there is a step in the algorithm where clusters $X_1$
and $X_2$ merge, where $X_1, X_2 \subset C_i$, $x_1 \in X_1$, and $x_2 \in X_2$. At
this point, there is some cluster $X_3 \subseteq C_j$ so that $x_3 \in X_3$.

We compare $\ell_{AL}(X_1,X_2, d, w')$ and $\ell_{AL}(X_1, X_3, d,w')$. First, note that
\[a
\ell_{AL}(X_1,X_2, d, w') = \frac{W^2d(x_1, x_2) + \alpha_1W+\alpha_2}{W^2+\alpha_3W + \alpha_4}
\]
for some non-negative real valued $\alpha_i$s. Similarly, we have that for some non-negative real-valued $\beta_i$:
\[
\ell_{AL}(X_1, X_3, d,w')=\frac{W^2d(x_1, x_3)+ \beta_1W+\beta_2}{W^2+\beta_3W + \beta_4}
\]

Dividing by $W^2$, we see that $\ell_{AL}(X_1, X_3, d,w') \rightarrow d(x_1, x_3)$ and
$\ell_{AL}(X_1,X_2, d, w') \rightarrow d(x_1, x_2)$ as $W \to \infty$, and so the result holds since $d(x_1, x_3) <  d(x_1, x_2)$. Therefore average linkage merges $X_1$ with $X_3$, thus cluster $C_i$ is never formed, and so $C$ is not a clustering in $AL(w'[X],d)$. If follows that $|range(AL(X,d))|>1$,
\end{proof}


\subsection{Ward's Method}

Ward's method is a highly effective clustering algorithm  (\cite{ward1963hierarchical}), which, at every step, merges
the clusters that will yield the minimal increase to the sum-of-squares error (the $k$-means objective function). Let $ctr(X,d,w)$ be the
center of mass of the data set $(w[X],d)$.  Then, the linkage function for Ward's method is
$$ \ell_{Ward}(X_1, X_2, d,w)
 = \frac{w(X_1)\cdot w(X_2) \cdot d( ctr(X_1,d,w), ctr(X_2,d,w))^2}{w(X_1) + w(X_2)},$$ where $X_1$ and $X_2$ are disjoint subsets (clusters) of $X$.

\begin{theorem}\label{Ward}
Ward's method is weight sensitive.
\end{theorem}
\begin{proof}
Consider any data set $(X,d)$ and any clustering $C$ output by Ward's method on $(X,d)$. Let $x,y
\in X$ be any distinct points that belong to the same cluster in $C$. Let $w$ be the weight function
that assigns a large weight $W$ to points $x$ and $y$, and weight $1$ to all other elements.

Since Ward's method is linkage-based, it starts off by placing every element in its own cluster. We
will show that when $W$ is large enough, it is prohibitively expensive to merge a cluster that
contains $x$ with a cluster that contains point $y$. Therefore, there is no cluster in the
dendrogram produced by Ward's method that contains both points $x$ and $y$, other than the root, and
so $C$ is not a clustering in that dendrogram. This would imply that $|range(Ward(X,d))|>1$.

At some point in the execution of Ward's method, $x$ and $y$ must belong to different clusters.  Let
$C_i$ be a cluster that contains $x$, and cluster $C_j$ a cluster that contains point $y$. Then
$\ell_{Ward}(C_i, C_j,d,w) \rightarrow \infty$ as $W \rightarrow \infty$. On the other hand,
whenever at most one of $C_i$ or $C_j$ contains an element of $\{x,y\}$, $\ell_{Ward}(C_i,C_j,d,w)$
approaches some constant as $W \rightarrow \infty$. This shows that when $W$ is sufficiently large,
a cluster containing $x$ is merged with a cluster containing $y$ only at the last step of the
algorithm, when forming the root of the dendrogram.

\end{proof}


\subsection{Divisive Algorithms}

The class of divisive clustering algorithms is a well-known family of hierarchical algorithms, which
construct the dendrogram by using a top-down approach. This family of algorithms
includes the popular bisecting $k$-means algorithm. We show that a class of algorithms that includes bisecting $k$-means consists of weight-sensitive methods.

Given a node $x$ in dendrogram $(T,M)$, let $\mathcal{C}(x)$ denote the cluster represented by node
$x$. That is, $\mathcal{C}(x) = \{M(y) \mid y \textrm{ is a leaf and a descendent of }x\}$.

Informally, a $\mathcal{P}$-Divisive algorithm is a hierarchical clustering algorithm that uses a
partitional clustering algorithm $\mathcal{P}$ to recursively divide the data set into two clusters
until only single elements remain. Formally, a $\mathcal{P}$-divisive algorithm is defined as follows.

\begin{definition}[$\mathcal{P}$-Divisive]\label{divisive}

A hierarchical clustering algorithm $\mathcal{A}$ is \emph{$\mathcal{P}$-Divisive} with respect to a partitional
clustering algorithm $\mathcal{P}$, if for all $(X,d)$, we have $\mathcal{A}(w[X],d) = (T,M)$, such that for
all non-leaf nodes $x$ in $T$ with children $x_1$ and $x_2$,  $\mathcal{P}(w[\mathcal{C}(x)],d,2) =
\{\mathcal{C}(x_1), \mathcal{C}(x_2)\}$.

\end{definition}

We obtain bisecting $k$-means by setting $\mathcal{P}$ to $k$-means. Other natural choices for  $\mathcal{P}$ include min-sum, and exemplar-based algorithms such as $k$-median. As shown above, many of these partitional algorithms are weight-separable. We show that whenever $\mathcal{P}$ is weight-separable, then $\mathcal{P}$-Divisive is weight-sensitive.

\begin{theorem}
If $\mathcal{P}$ is weight-separable then the $\mathcal{P}$-Divisive algorithm is weight-sensitive.
\end{theorem}
\begin{proof}
Given any non-trivial clustering $C$ output by the $\mathcal{P}$-Divisive algorithm, consider any pair of
elements $x$ and $y$ that are placed within the same cluster of $C$. Since $\mathcal{P}$ is weight
separating, there exists a weight function $w$ so that $\mathcal{P}$ separates points $x$ and $y$.
Then $\mathcal{P}$-Divisive splits $x$ and $y$ in the first step, directly below the root, and
clustering $C$ is never formed.
\end{proof}

\section{Heuristic Approaches}\label{heuristics}

We have seen how weights affect various algorithms that optimize different
clustering objectives.  Since optimizing a clustering objective is usually
NP-hard, heuristics are used in practice. In this section, we  consider several common
heuristical clustering approaches, and show how they respond to
weights.

We note that there are many algorithms that aim to find high quality partitions for popular objective functions. For the $k$-means objective alone many different algorithms have been proposed, most of which provide different initializations for Lloyd's method. For example, \cite{steinley2007initializing} studied  a dozen different initializations. There are many other algorithms based on the $k$-means objective functions, some of the most notable being  $k$-means++ (\cite{Arthur}) and the Hochbaum-Schmoys initialization (\cite{hochbaum1985best}) studied, for instance, by \cite{bubeck2009initialization}. As such, this section is not intended as a comprehensive analysis of all available heuristics, but rather it shows how to analyze such heuristics, and provides a classification for some of the most popular approaches. 

To define the categories in the randomized setting, we need to modify the definition of \emph{range}. Given a randomized, partitional clustering algorithm $\mathcal{A}$ and a data set $(X,d)$, the \emph{randomized range} is $$randRange(\mathcal{A}(X,d)) = \{ C \mid \forall \epsilon<1 \exists w \textrm{ such that } P( \mathcal{A}(w[X],d) = C) > 1-\epsilon\}.$$ That is, the randomized range is the set of clusterings that are produced with arbitrarily high probably, when we can modify weights.

The categories describing an algorithm's behaviour on weighted data are defined as previously, but using randomized range.

\begin{definition}[Weight Sensitive (Randomized Partitional )]
A partitional algorithm $\mathcal{A}$ is \emph{weight-sensitive} if for all $(X,d)$ and $1 < k < |X|$, $|randRange(\mathcal{A}(X,d,k))| > 1$.
\end{definition}

\begin{definition}[Weight Robust (Randomized Partitional)]
A partitional algorithm $\mathcal{A}$ is \emph{weight-robust} if for all $(X,d)$ and $1 < k < |X|$, $|randRange(\mathcal{A}(X,d,k))| =1$.
\end{definition}

\begin{definition}[Weight Considering (Randomized Partitional)]
A partitional algorithm $\mathcal{A}$ is \emph{weight-considering} if
\begin{itemize}
\item There exist $(X,d)$ and $1< k <|X|$ so that $|randRange(\mathcal{A}(X,d,k))| = 1$, and
\item There exist $(X,d)$ and $1< k <|X|$ so that $|randRange(\mathcal{A}(X,d,k))| > 1$.
\end{itemize}
\end{definition}

\subsection{Partitioning Around Medoids (PAM)}

In contrast with the Lloyd method and $k$-means, PAM is a heuristic for exemplar-based objective functions such as $k$-medoids, which chooses data points as centers (thus it is not required to compute centers of mass).  As a result, this approach can be applied to arbitrary data, not only to normed vector spaces.

\emph{Partitioning around medoids} (PAM) is given an initial set of
$k$ centers, $T$, and changes $T$ iteratively to find a ``better'' set
of $k$ centers.  This is done by swapping out centers for other points
in the data set and computing the cost function: $\sum_{c \in T}
\sum_{x \in X \setminus T, c \sim_C x} d(x,c)\cdot w(x)$.  Each
iteration performs a swap only if a better cost is possible, and it stops when
no changes are made \cite{PAM_book}.

Note that our results in this section hold regardless of how the initial $k$
centers are chosen from $X$.

\begin{theorem}
PAM is weight-separable.
\end{theorem}
\begin{proof}
Let $T = \{x_1, \ldots, x_l\}$ be $l$ points that we want to separate, where $2 \leq l \leq k$.
Let $\{m_1, \ldots, m_k\} \subset X$ be $k$ centers chosen by the PAM initialization, and denote by
$C = \{c_1, \ldots, c_k\}$ the clustering induced by the corresponding centers.  Set $w(x_i) = W,
\forall x_i \in S$ for some large $W$. We first note that any optimal clustering $C^{*}$ sets the
points in $T$ as the centers. The cost of $C^{*}$ is constant as a function of $W$, while every
clustering with a different set of centers has a cost proportional to $W$, which can be made
arbitrarily high by increasing $W$.

Assume, by contradiction, that the algorithm stops at a clustering $C$ that does not separate all
the points in $T$. Then, there exists a cluster $c_i \in C$ such that $|c_i \cap T| \geq 2$.
Thus, $c_i$ contributes a factor of $\alpha\cdot W$ to the cost, for some $\alpha > 0$.  Further,
there exists a cluster $c_j \in C$ such that $c_j \cap T = \emptyset$.  Then the cost of $C$ can
be further decreased, by a quantity proportional to $W$, by assigning one of the heavy non-medoid
from $c_i$ to be the center of $c_j$, which is a contradiction. Thus, the algorithm cannot stop
before setting all the heavy points as cluster centers.
\end{proof}

\subsection{Llyod method}

The Lloyd method is a heuristic commonly used for uncovering clusterings with low $k$-means objective
cost.  The Lloyd algorithm can be combined with different approaches for seeding the initial
centers.  In this section, we start by considering the following deterministic seeding methods.

\begin{definition}[Lloyd Method]
Given $k$ points (centers) $\{c_1, \ldots, c_k\}$ in the space, assign every element of $X$ to its closest center.
Then compute the centers of mass of the resulting clusters by summing the elements in each cluster and dividing by the number of elements in that partition, and assign every element to its closest new center. Continue until
no change is made in one iteration.
\end{definition}

With the Lloyd method, the dissimilarity (or, distance) to the center can be both the $\ell_1$-norm or squared.

First, we consider the case when the $k$ initial centers are chosen in a deterministic fashion.
For example, one deterministic seeding approach involves selecting the $k$-furthest centers (see, for example, \cite{NIPS2010}).

\begin{theorem}
\label{thm:det-lloyd-not-nice}
Let $\mathcal{A}$ represent the Lloyd method with some deterministic seeding procedure. Consider any data set $(X,d)$ and $1< k<|X|$.
 If there exists a clustering $C \in range(\mathcal{A}(X,d,k))$ that is not nice, then $|range(\mathcal{A}(X,d,k))|>1$.
\end{theorem}
\begin{proof}
For any seeding procedure, since $C$ is in the range of $\mathcal{A}(X,d)$, there exists a weight
function $w$ so that $C = \mathcal{A}(w[X],d)$.

Since $C$ is not nice, there exist points $x_1, x_2, x_3 \in X$ where $x_1 \sim_C x_2$, $x_1
\not\sim_C x_3$, but $d(x_1,x_3) < d(x_1,x_2)$.  Construct weight function $w'$ such that $w'(x) =
1$ for all $x \in X \setminus \{x_2,x_3\}$, and $w'(x_2) = w'(x_3) = W$, for some constant $W$.

If for some value of $W$, $\mathcal{A}(w'[X],d,k)\neq C$, then we're done. Otherwise,
$\mathcal{A}(w'[X],d,k)=C$ for all values of $W$. But if $W$ is large enough, the center of mass of
the cluster containing $x_1$ and $x_2$ is arbitrarily close to $x_2$, and the center of mass of the
cluster containing $x_3$ is arbitrarily close to $x_3$. But since $d(x_1,x_3) < d(x_1,x_2)$, the Lloyd method
would assign $x_1$ and $x_3$ to the same cluster. Thus, when $W$ is sufficiently large,
$\mathcal{A}(w'[X],d,k)\neq C$.
\end{proof}

We also show that for a deterministic, weight-independent initialization, if the Lloyd method
outputs a nice clustering $C$, then this algorithm is robust to weights on that data. 

\begin{theorem}
\label{thm:det-lloyd-nice}
Let $\mathcal{A}$ represent the Lloyd method
with some weight-independent deterministic seeding procedure.  Given $(X,d)$, if there exists a nice clustering in the $range(\mathcal{A}(X,d))$, then $\mathcal{A}$ is weight robust on $(X,d)$. 
\end{theorem}
\begin{proof}
Since the initialization is weight-independent, $A$ will find the same initial centers on any weight
function. Given a nice clustering, the Lloyd method does not modify the clustering.  If the seeding
method were not weight-independent, it may seed in a way that may prevent the Lloyd method from
finding $C$ for some weight function.
\end{proof}

\begin{corollary}
Let $\mathcal{A}$ represent the Lloyd method initialized with furthest centroids.  For any $(X,d)$ and $1 < k < |X|$, $|range(\mathcal{A}(X,d,k))|=1$ if and only if there exists a nice $k$-clustering of $(X,d)$.
\end{corollary}


\subsubsection{$k$-means++}

The $k$-means++ algorithm, introduced by Arthur and Vassilvitskii (\cite{Arthur}) is the Lloyd algorithm with a randomized
initialization method that aims to place the initial centers far apart from each other. This algorithm has been demonstrated to perform very well in practice.

Let $D(x)$ denote the shortest dissimilarity from a point $x$ to the closest center already chosen.
The $k$-means++ algorithm chooses the initial center uniformly at random, and then $x$ is selected as the next center with probability $\frac{D(x)^2w(x)}{\sum_{y}D(y)^2w(y) }$ until $k$ centers have been chosen.

\begin{theorem}
$k$-means++ is weight-separable.
\end{theorem}
\begin{proof}
If any $k$ points $\{x_1, \ldots, x_k\}$ are assigned sufficiently high weight $W$, then the first
center will be one of these points with arbitrarily high probability. The next center will also be
selected with arbitrarily high probability if $W$ is large enough, since for all $y \not\in \{x_1,
\ldots, x_k\}$, the probability of selecting $y$ can be made arbitrarily small when $W$ is large
enough.
\end{proof}

The same argument works for showing that the Lloyd method is weight-separable when the initial centers are selected uniformly at random (Randomized Lloyd). An expanded classification of clustering algorithms that includes heuristics is given in Table 1 below.


\begin{table*}[ht]
\begin{center}
\begin{tabular}[ht!]{||l||l|l|l||}
\hline
\hline
 & {\textbf{Partitional}}  & {\textbf{Hierarchical}} &  {\textbf{Heuristics}}\\
\hline
\hline
\textbf{Weight} &  $k$-means, $k$-medoids& Ward's method & Randomized Lloyd,
\\
\textbf{Sensitive}& $k$-median, Min-sum& Bisecting $k$-means & PAM, $k$-means++\\
 \hline
\textbf{Weight} &  &  & Lloyd with \\
\textbf{Considering} & Ratio-cut & Average-linkage & Furthest centroids\\
 \hline
\textbf{Weight} & Min-diameter & Single-linkage &   \\
\textbf{Robust}   & $k$-center & Complete-linkage & \\
\hline
\hline
\end{tabular}
\end{center}
\caption{A classification of clustering algorithms based on their response to weighted data expanded to include several popular heuristic methods.}
\label{summary_full}
\end{table*}


\section{Conclusion}

We studied the
behaviour of clustering algorithms on weighted data, presenting three fundamental categories that
describe how such algorithms respond to weights and classifying several
well-known algorithms according to these categories. Our results are summarized in Table~\ref{summary}. We note that all of our results immediately translate to the standard setting, by mapping each point with integer weight to the same number of unweighted duplicates.

Our results can be used to aid in the selection of a clustering algorithm. For example, in the facility allocation application discussed in the introduction, where weights are of primal importance, a weight-sensitive algorithm is suitable. Other applications may call for weight-considering algorithms. This can
occur when weights (i.e. number of duplicates) should not be ignored, yet
it is still desirable to identify rare instances that constitute small but well-formed
outlier clusters. For example, this applies to patient data on potential
causes of a disease, where it is crucial to investigate rare instances. 

This paper presents a significant step forward in the property-based approach for selecting clustering algorithms. Unlike previous properties, which focused on advantages of linkage-based algorithms, these properties show when applications call for popular center-based approaches, such as $k$-means. Furthermore, the simplicity of these properties makes them widely applicable, requiring only that the user decide whether duplicating elements should be able to change the output of the algorithm. Future work will consider complimentary considerations, with the ultimate goal of attaining a small set of properties that will aid in ``the user's dilemma'' for a wide range of clustering applications. 

}

\bibliographystyle{plain}

\begin{thebibliography}{10}
	
	\bibitem{IJCAI2011}
	M.~Ackerman and S.~Ben-David.
	\newblock Discerning linkage-based algorithms among hierarchical clustering
	methods.
	\newblock In {\em IJCAI}, 2011.
	
	\bibitem{COLT2010}
	M.~Ackerman, S.~Ben-David, and D.~Loker.
	\newblock Characterization of linkage-based clustering.
	\newblock In {\em COLT}, 2010.
	
	\bibitem{NIPS2010}
	M.~Ackerman, S.~Ben-David, and D.~Loker.
	\newblock Towards property-based classification of clustering paradigms.
	\newblock In {\em NIPS}, 2010.
	
	\bibitem{Arthur}
	D.~Arthur and S.~Vassilvitskii.
	\newblock K-means++: The advantages of careful seeding.
	\newblock In {\em SODA}, 2007.
	
	\bibitem{balcan}
	M.~F. Balcan, A.~Blum, and S.~Vempala.
	\newblock A discriminative framework for clustering via similarity functions.
	\newblock In {\em STOC}, 2008.
	
	\bibitem{reza}
	R.~Bosagh-Zadeh and S.~Ben-David.
	\newblock A uniqueness theorem for clustering.
	\newblock In {\em UAI}, 2009.
	
	\bibitem{bubeck2009initialization}
	S{\'e}bastien Bubeck, Marina Meila, and Ulrike von Luxburg.
	\newblock How the initialization affects the stability of the k-means
	algorithm.
	\newblock {\em arXiv preprint arXiv:0907.5494}, 2009.
	
	\bibitem{carlsson2010characterization}
	Gunnar Carlsson and Facundo M{\'e}moli.
	\newblock Characterization, stability and convergence of hierarchical
	clustering methods.
	\newblock {\em The Journal of Machine Learning Research}, 11:1425--1470, 2010.
	
	\bibitem{fisher}
	L.~Fisher and J.~Van Ness.
	\newblock Admissible clustering procedures.
	\newblock {\em Biometrika}, 58:91--104, 1971.
	
	\bibitem{hochbaum1985best}
	Dorit~S Hochbaum and David~B Shmoys.
	\newblock A best possible heuristic for the k-center problem.
	\newblock {\em Mathematics of operations research}, 10(2):180--184, 1985.
	
	\bibitem{hubert1975hierarchical}
	Lawrence Hubert and James Schultz.
	\newblock Hierarchical clustering and the concept of space distortion.
	\newblock {\em British Journal of Mathematical and Statistical Psychology},
	28(2):121--133, 1975.
	
	\bibitem{jardine1968construction}
	Nicholas Jardine and Robin Sibson.
	\newblock The construction of hierarchic and non-hierarchic classifications.
	\newblock {\em The Computer Journal}, 11(2):177--184, 1968.
	
	\bibitem{PAM_book}
	L.~Kaufman and P.~J. Rousseeuw.
	\newblock {\em Partitioning Around Medoids (Program PAM)}, pages 68--125.
	\newblock John Wiley \& Sons, Inc., 2008.
	
	\bibitem{Kleinberg}
	J.~Kleinberg.
	\newblock An impossibility theorem for clustering.
	\newblock {\em Proceedings of International Conferences on Advances in Neural
		Information Processing Systems}, pages 463--470, 2003.
	
	\bibitem{ostrovsky}
	R.~Ostrovsky, Y.~Rabani, L.~J. Schulman, and C.~Swamy.
	\newblock {The effectiveness of Lloyd-type methods for the k-means problem}.
	\newblock In {\em FOCS}, 2006.
	
	\bibitem{sahni1976p}
	Sartaj Sahni and Teofilo Gonzalez.
	\newblock P-complete approximation problems.
	\newblock {\em Journal of the ACM (JACM)}, 23(3):555--565, 1976.
	
	\bibitem{steinley2006k}
	D.~Steinley.
	\newblock K-means clustering: a half-century synthesis.
	\newblock {\em British Journal of Mathematical and Statistical Psychology},
	59(1):1--34, 2006.
	
	\bibitem{steinley2007initializing}
	Douglas Steinley and Michael~J Brusco.
	\newblock Initializing k-means batch clustering: a critical evaluation of
	several techniques.
	\newblock {\em Journal of Classification}, 24(1):99--121, 2007.
	
	\bibitem{von2007tutorial}
	U.~{Von Luxburg}.
	\newblock {A tutorial on spectral clustering}.
	\newblock {\em J. Stat. Comput.}, 17(4):395--416, 2007.
	
	\bibitem{ward1963hierarchical}
	Joe~H Ward~Jr.
	\newblock Hierarchical grouping to optimize an objective function.
	\newblock {\em Journal of the American statistical association},
	58(301):236--244, 1963.
	
	\bibitem{wright}
	W.~E. Wright.
	\newblock {A formalization of cluster analysis}.
	\newblock {\em J. Pattern Recogn.}, 5(3):273--282, 1973.
	
\end{thebibliography}

\end{document}